\theoremstyle{thmstyleone}%
\newtheorem{theorem}{Theorem}%  meant for continuous numbers
\newtheorem{proposition}[theorem]{Proposition}% 
\theoremstyle{thmstyletwo}%
\newtheorem{remark}{Remark}%
\newtheorem{lemma}{Lemma}
\theoremstyle{thmstylethree}%
\newtheorem{definition}{Definition}%
\renewcommand{\Bbb}{\mathbb}
\begin{document}

\title[A new trigonometric kernel function for support vector machine  ]{A new trigonometric kernel function for support vector machine  }

%%=============================================================%%
%% Prefix	-> \pfx{Dr}
%% GivenName	-> \fnm{Joergen W.}
%% Particle	-> \spfx{van der} -> surname prefix
%% FamilyName	-> \sur{Ploeg}
%% Suffix	-> \sfx{IV}
%% NatureName	-> \tanm{Poet Laureate} -> Title after name
%% Degrees	-> \dgr{MSc, PhD}
%% \author*[1,2]{\pfx{Dr} \fnm{Joergen W.} \spfx{van der} \sur{Ploeg} \sfx{IV} \tanm{Poet Laureate} 
%%                 \dgr{MSc, PhD}}\email{iauthor@gmail.com}
%%=============================================================%%
\author*[1,2]{\fnm{Sajad} \sur{Fathi Hafshejani}}\email{s.fathi@sutech.ac.ir}

\author[1]{\fnm{Zahra} \sur{Moaberfard}}\email{zahra.moaberfard@gmail.com}

\affil*[1]{\orgdiv{Department of Applied Mathematics}, \orgname{Shiraz University of Technology}, \orgaddress{\street{ Modarres Boulevard}, 
\city{Shiraz}, \postcode{71557-13876}, \state{Fars}, \country{Iran}}}
\affil*[2]{\orgdiv{Department of Math and Computer Science}, \orgname{University of Lethbridge}, \orgaddress{\street{ 4401 University Dr W}, \city{Lethbridge}, \postcode{T1K 3M4}, \state{AB}, \country{Canada}}}

%%==================================%%
%% sample for unstructured abstract %%
%%==================================%%

\abstract{In the last few years, various types of machine learning algorithms, such as Support Vector Machine (SVM), Support Vector Regression (SVR), and Non-negative Matrix Factorization (NMF) have been introduced. The kernel approach is an effective method for increasing the classification accuracy of machine learning algorithms. This paper introduces a family of one-parameter kernel functions for improving the accuracy of SVM classification. The proposed kernel function consists of a trigonometric term and differs from all existing kernel functions. We show this function is a positive definite kernel function. Finally, we evaluate the SVM method based on the new trigonometric kernel, the Gaussian kernel, the polynomial kernel, and a convex combination of the new kernel function and the Gaussian kernel function on various types of datasets. Empirical results show that the SVM based on the new trigonometric kernel function and the mixed kernel function achieve the best classification accuracy. Moreover, some numerical results of performing the SVR based on the new trigonometric kernel function and the mixed kernel function are presented.}

\keywords{Support vector machine, Kernel-method, Trigonometric kernel function}

%%\pacs[JEL Classification]{D8, H51}

%%\pacs[MSC Classification]{35A01, 65L10, 65L12, 65L20, 65L70}

\maketitle

\section{Introduction}
Support Vector Machine (SVM) is a supervised learning algorithm mostly used for classification, but it can also be applied for regression. Vapnik \citep{cortes1995support} proposed the SVM method for the first time, and it has been utilized in a wide range of real-world problems such as bioinformatics \citep{byvatov2003support}, biometrics \citep{vatsa2005improving}, power systems \citep{moulin2004support}, and chemoinformatics \citep{doucet2007nonlinear}. In SVM, the training data are used for training and building the classification model. This model is then used to classify unknown samples.

SVM achieves competitive results when the data are linearly separable.  However, kernel functions can be considered {for non-separable data}.  They map the data into a vector space and use linear algebra and geometry to find out the structure of the data. There are some reasons to map the data into a feature space. By mapping the original data into higher-dimensional space, it is possible to transform nonlinear relations within the data into linear ones \citep{tharwat2019parameter}. 

It has been proven that the kernel's theory is based on structural risk minimization by using the maximum margin idea \citep{zhou2022novel}.
The kernel function, which is a key factor in determining algorithm performance, is at the heart of many machine learning algorithms. Kernels allow data to be mapped into {high-dimensional} feature space to increase the computational power of linear machines. Thus, it is a way of extending linear hypotheses to non-linear ones, and this step can be performed implicitly. SVM can be classified into linear and non-linear approaches \citep{liang2020constructing}. Trying to learn a non-linear separating boundary in the input space increases the computational requirements during the optimization phase because the separating surface will be of at least the second order. Instead, SVM maps the data, using predefined kernel functions, into a new but higher-dimensional space, where a linear separator would be able to discriminate between the different classes. The SVM's optimization phase will thus entail learning only a linear discriminant for surfacing the mapped space. Of course, the kernel function's selection and settings are critical for SVM performance. Kernel methods, in fact, transfer the original data into another space, the so-called "feature space," and reveal the data's hidden features \citep{apsemidis2020review}. 

Various kernel functions for machine learning algorithms have been introduced, and some of their significant properties have been explored \citep{smola1999advances,wang2014multi,padierna2018novel}. The Gaussian kernel function is a popular kernel function used in most machine learning algorithms \citep{hoang2019motor}. Table \ref{kernel} demonstrates four popular kernel functions used in most classification papers.

\begin{table}[h!]
\begin{center}

\caption{Four common kernel functions}
 \begin{tabular}{lll}
\hline\noalign{\smallskip}
i& name&$K(x_i,x_j)$ \\
\hline
1&Polynomial kernel&$K_1(x_i,x_j)=(1+x_ix_j)^p$\\
%\hline
2&Gaussian kernel& $K_2(x_i,x_j)=\exp(-\frac{\|x_i-x_j\|^2}{2\sigma^2})$ \\
%\hline
3&RBF Kernel & $K_3(x_i,x_j)=\exp(-\gamma{\|x_i-x_j\|^2})$ \\
%\hline
4& Sigmoid kernel&$K_4(x_i,x_j)=\tanh(\alpha +\beta x_i^Tx_j)$ \\
\noalign{\smallskip}\hline
\end{tabular}
\label{kernel}  
\end{center}
\end{table}
The number of kernel parameters is one of the prime issues in many kernel-based methods. 
%The existing kernel functions have at least one parameter. 
Finding the best values for kernel parameters is a challenging problem because their values have a significant impact on their performance. In fact, when the kernel has at least two parameters, the kernel-based method consumes more time in the training phase to find the best values for these parameters. During these years, researchers have been focusing on introducing new kernel functions with one parameter. For example, due to the existence of two parameters in the Sigmoid kernel function in Table \ref{kernel},   machine learning algorithms based on this kernel function spend more time finding the best values for the parameters $\alpha$ and $\beta$. The Gaussian kernel function is an example with one parameter and has been widely utilized in most machine learning algorithms. However, the accuracy of the SVM classification based on this kernel function on some datasets is still not good. In this regard, many efforts have been made to introduce new kernel functions to increase the classification accuracy of the SVM. Recently, Tharwat \cite{parameter2019} suggested a new approach for estimating the best values for kernel parameters. Their method works based on the maximum and minimum distances between samples in each class of dataset.
 
In the two past decades, significant efforts have been devoted to the development of kernel functions for kernel-based methods. Combining kernel functions is a successful approach for kernel-based methods. It has been proven that the combination of two or more kernel functions is also a kernel function, and the new combined kernel function can significantly improve the performance of the SVM \citep{feng2018adaptive}. 

\subsection{Contribution}
\begin{itemize}
    \item This paper gives a new  family of one-parameter trigonometric kernel function to improve accuracy of the SVM and SVR approaches.  We demonstrate that the new kernel function satisfies the kernel conditions. Moreover,  we prove that the new proposed kernel function is a positive definite, which is a critical condition for kernel functions. 
    \item We propose a  new mixed kernel function, which is obtained by combining the new trigonometric function and  the Gaussian function.
    \item We adapt the proposed method in \citep{parameter2019}  to the new trigonometric kernel-SVM and show how to predict the best value for parameter $\sigma$. 
    \item  We perform an extensive implementation to demonstrate the efficiency of the new trigonometric kernel function.  We evaluate the accuracy of the SVM based on the new trigonometric kernel function, the Gaussian kernel function, the polynomial kernel function, as well as a convex combination of the trigonometric kernel function and the Gaussian kernel function on 24 datasets with different ranges of dimensions and numbers of features. Empirical results show that the new proposed kernel function gives good classification accuracy in nearly all the data sets, especially those of high dimensions. It is shown that the new mixed kernel function is very efficient for data classification. Moreover, we examine the efficiency of the SVR method based on the new trigonometric kernel function and the new mixed kernel function.
\end{itemize}
The paper is organized as:
In Section 2, we first recall that some properties of kernel function and then introduce the new trigonometric kernel function. We also investigate some properties of the new kernel function. In section 3, we present a new mixed kernel function by using a convex combination of the new kernel function and the Gaussian kernel function.  Some numerical results of performing kernel-SVM kernel-SVR  on 24 datasets are shown in  Section 3.  
We finally end up the paper by providing some concluding remarks.

We use the following notations conventions throughout the paper:
$\|.\|$ denotes the Euclidean norm of a vector, the non-negative
and positive orthants are denoted by $\Bbb{R}^{n}_{+}$ and $\Bbb{R}^{n}_{++}$ respectively.

%%%%%%%%%%%%%%%%%%%%%%%%%%%%%%%%%%%%% Section 2
\section{A new kernel function}
This section is devoted to presenting a new trigonometric kernel function for SVM and investigating various important properties of the new proposed kernel function. 

We start this section by examining an example. Let us investigate exactly how a kernel function makes data classification easier. Assume $X$ is a randomly generated dataset with two class labels of 1, 0. This dataset has 400 samples and consists of two circles.  We plot the original data in Fig. \ref{beh1} (left). It is obvious that a linear SVM is not able to classify data with a good accuracy. To address this problem, we utilize a kernel function and map data into a new space, Fig. \ref{beh1} (right).  After mapping, a linear SVM can classify data with high accuracy.
\begin{figure}[h!]
	\begin{center}
		\includegraphics[width=.35\textwidth]{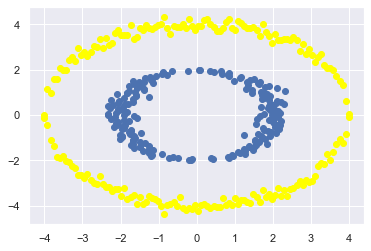}
		\includegraphics[width=.35\textwidth]{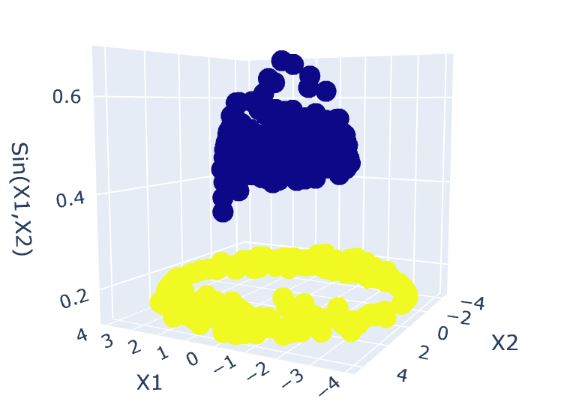}
		\caption{Behavior of kernel function}
		\label{beh1}
	\end{center}
\end{figure}  

Our next goal is to define a new kernel function. To this end, we first remind that the definition of the kernel function demonstrated in  \citep{Berg84}.
\begin{definition}
Let  $X\in \Bbb{R}^d$ be  a non-empty set. A function $K:X\times X\rightarrow \Bbb{R}$ on $X$ is a kernel function if there exists a $\Bbb{K}$-Hilbert space $H$ and a map $\Phi: X \rightarrow H$
such that:
\begin{equation}
K(x,x')=<\Phi(x),\Phi(x')>, \qquad \forall~~x,x'\in X.
\end{equation}
where $\Phi$ is  a nonlinear (or sometimes linear) map from the input space $X$ to the feature space $\mathcal{F}$, and $\langle.,.\rangle$ is an inner product. 
Besides, the function $\Phi$ is so-called a {\it feature map}
and $H$ called a {\it feature space} of $K$.
\end{definition}
\begin{definition}
A kernel function is shift-invariant if it has the following form:
\begin{equation}
    K(x,x')=K(x-x').
\end{equation}
\end{definition}

Now we can introduce a new kernel function for SVM. In this regard, consider the following trigonometric  function:
\begin{eqnarray}\label{ker}
\psi(x)=\sin(h(x)),\qquad
h(x)= \frac{\pi}{2+\sigma x^2},
%h_2(x)=\frac{\pi}{4+\sigma^2\|x\|2},\label{fun2}
\end{eqnarray}
where $\sigma$ is a positive real number. The behaviour of the functions $\psi(x)$, $\psi'(x)$, and $h(x)$ with different values of parameter $\sigma$  are  shown in Fig. \ref{fi}.
\begin{figure}[h!]
	\begin{center}
		\includegraphics[width=.35\textwidth]{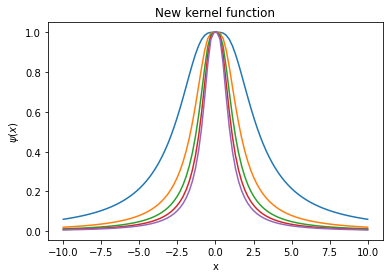}
		\includegraphics[width=.35\textwidth]{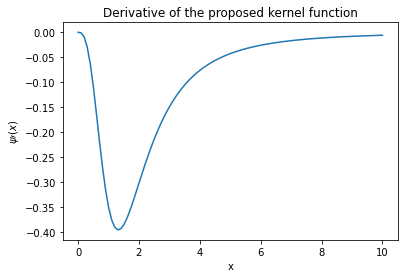}\\
			\includegraphics[width=.35\textwidth]{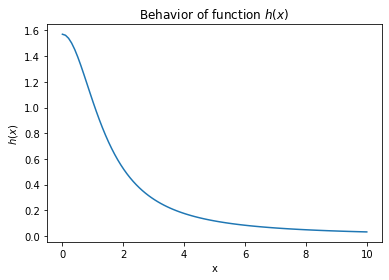}
		\caption{The behaviour of the function $\psi(x)$, $\psi'(x)$, and $h(x)$ with different values of parameter $\sigma$.}
		\label{fi}
	\end{center}
\end{figure} \\
{ 
From Fig. \ref{fi}, we can conclude that:
\begin{itemize}
    \item The new kernel function has a different value for each value of parameter $\sigma$. It implies that, we can figure out the best value for the $\sigma$ for each dataset.
    \item The kernel function's first derivative is negative, implying that $\psi(x)$ is a decreasing function for all $x\geq 0$. 
    \item We have $h(x)\in(0,\frac{\pi}{2}]$, for all $x\geq 0$. 
\end{itemize}}
Next lemma presents some properties of the function given by Eq. \ref{ker}.

\begin{lemma}
Let  $\psi$ be a function defined by Eq. \ref{ker}. Then, we have:
\begin{itemize}
 \item  $\psi(0)=1$.
    \item  $\psi(x)> 0$ for all $x\in \Bbb{R}$.
        \item  $\psi'(x)\leq 0$, for all $x\in \Bbb{R}_+$.  
    \item    $\psi$ is bounded, i.e., for all $x\in \Bbb{R}$, we have $\|\psi(x)\|\leq \psi(0)$.
\end{itemize}
\end{lemma}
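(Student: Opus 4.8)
The plan is to verify each of the four bullet points directly from the definition $\psi(x)=\sin\!\bigl(h(x)\bigr)$ with $h(x)=\frac{\pi}{2+\sigma x^2}$, treating them in the order listed since each later claim leans on the structural facts established earlier. For the first item, I would simply evaluate at $x=0$: since $h(0)=\frac{\pi}{2+0}=\frac{\pi}{2}$, we get $\psi(0)=\sin\!\bigl(\tfrac{\pi}{2}\bigr)=1$. This is immediate and fixes the normalization that the boundedness claim will reference.

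For the positivity claim $\psi(x)>0$, the key observation is to pin down the range of the inner function $h$. For any real $x$ we have $\sigma x^2\ge 0$, so the denominator satisfies $2+\sigma x^2\ge 2$, whence $0<h(x)\le \frac{\pi}{2}$. Since $\sin$ is strictly positive on the interval $(0,\tfrac{\pi}{2}]$, it follows that $\psi(x)=\sin\!\bigl(h(x)\bigr)>0$ for all $x\in\Bbb{R}$. (Note $h(x)$ never actually reaches $0$ because the denominator is finite for finite $x$, so $\sin(h(x))$ stays bounded away from the problematic endpoint; the only way to hit the value $1$ is at $x=0$.) This range computation is the workhorse lemma that I would isolate first, as both the positivity and the boundedness bullets depend on it.

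For the derivative sign, I would compute $\psi'(x)=\cos\!\bigl(h(x)\bigr)\,h'(x)$ by the chain rule, and then $h'(x)=-\frac{2\pi\sigma x}{(2+\sigma x^2)^2}$. For $x\in\Bbb{R}_+$ the factor $h'(x)$ is $\le 0$ (strictly negative for $x>0$, zero at $x=0$), while $\cos\!\bigl(h(x)\bigr)\ge 0$ because $h(x)\in(0,\tfrac{\pi}{2}]$ and cosine is nonnegative on that interval. The product of a nonnegative factor and a nonpositive factor is nonpositive, giving $\psi'(x)\le 0$ on $\Bbb{R}_+$, which confirms the decreasing behavior already noted from Fig.~\ref{fi}.

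For boundedness, the statement $\|\psi(x)\|\le\psi(0)$ reduces to $|\sin(h(x))|\le 1=\psi(0)$, which holds universally for the sine function; combined with positivity this says $0<\psi(x)\le 1$ for all $x$. I would present this as a corollary of the range bound together with $\psi(0)=1$: since $h(x)\in(0,\tfrac{\pi}{2}]$ and $\psi$ is decreasing in $|x|$ with maximum attained at $x=0$, the value $\psi(0)=1$ is the supremum. I do not anticipate a genuine obstacle here, as all four claims are elementary once the range $h(x)\in(0,\tfrac{\pi}{2}]$ is established; the only point requiring a little care is the even symmetry, since the derivative claim is stated only on $\Bbb{R}_+$ while positivity and boundedness are global — I would note that $\psi$ is an even function of $x$ (as $h$ depends on $x$ only through $x^2$) so the behavior on $\Bbb{R}_-$ mirrors that on $\Bbb{R}_+$.
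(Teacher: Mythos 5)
Your proof is correct and follows essentially the same route as the paper's: evaluate $h(0)=\pi/2$, establish the range bound $h(x)\in(0,\tfrac{\pi}{2}]$, apply the chain rule with a sign analysis for the derivative, and combine these for boundedness. If anything, your version is slightly more careful than the paper's — you compute $h'(x)$ explicitly, correctly handle the boundary point $x=0$ (where $h'(0)=0$ and $\cos\bigl(h(0)\bigr)=0$, so only the non-strict inequality $\psi'(x)\le 0$ holds, whereas the paper asserts $h'(x)<0$ and a ``negative'' derivative), and you get boundedness directly from $|\sin|\le 1$ rather than from the paper's terser monotonicity-plus-evenness argument.
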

\begin{proof}
Using this fact $\sin(\frac{\pi}{2})=1$, we can conclude that the first item is true.\\
To prove the second item,   we know that   $\|.\| $ is non-negative, so $h_1(x)=\frac{\pi}{2+\sigma \|x\|^2}\in
(0,\frac{\pi}{2}]$ for all $x$, which implies  $\sin(h_1(x))\in (0,1]$.
To prove the third item, we have:
\begin{eqnarray}
 \psi'(x)=h'(x)\cos(h_1(t))\nonumber
\end{eqnarray}
It is clear that $h(x)$ is a decreasing function, so we can conclude that $h'(x)<0$. On the other hand, $\cos(h(x))$ is a positive function for all $x\in (0,\frac{\pi}{2}]$, i.e., the first derivative of the function $\psi(x)$ given by Eq. \ref{ker} is negative. 

The fourth item of the lemma is obtained by using the fact that $\psi(0)=1$,
$\psi(x)\leq 0$ and $ \psi (x)\geq0$ for all $x\geq0$.
\end{proof}

Now, we are in the position to define a new trigonometric kernel function for the SVM approach for the first time.
\begin{definition}
Suppose $x,x'\in X$, and the feature map function $\psi(x)$ given by Eq. \ref{ker}, then a new trigonometric kernel function is defined by:
\begin{equation}\label{kernel_func}
    K(x,x')=\sin(\frac{\pi}{2+\sigma\|x-x'\|^2})
\end{equation}
where $\sigma$ is a  positive real constant. 
\end{definition}
Using the symmetric property for the inner product, the first property of the new proposed kernel function can be expressed as follows:
\begin{equation}
K(x,x')=\sin(\frac{\pi}{2+\sigma\|x-x'\|^2})=\sin(\frac{\pi}{2+\sigma\|x'-x\|^2})=K(x',x),
\end{equation}
It implies that the new trigonometric kernel function satisfies the symmetric property.

The following definition recalls another property of the kernel function called {positive definite}. 
\begin{definition} (Positive definite kernel \cite{Berg84}) Let $X $ be a nonempty set. $K: X \times X \rightarrow  \Bbb{R}$ is called a positive kernel function if and only if for any $c\in\Bbb{R}^m$, the following inequality is true:
\begin{equation}
  \sum_{i,j=1}^{m}c_ic_jK_{i,j}\geq 0,
\end{equation}
in which  $K_{i,j}=K(x_i,x_j)$.
\end{definition}
\begin{remark}\label{t2}
Suppose that $K: X \times X \rightarrow  \Bbb{R}$  is a kernel function. Then $K$ is-so
called strictly positive definite if and only if for any $c\in\Bbb{R}^m$, we have:
\begin{equation}
  \sum_{i,j=1}^{m}c_ic_jK_{i,j}> 0.
\end{equation}
\end{remark}
Chasing the mentioned conditions is very challenging for some kernel functions.  A common way to address these challenges is to utilize the matrix of the kernel function. Van Den Berg et al.  \cite{Berg84} proved that a kernel function is positive definite if and only if the symmetric matrix $K$ is positive definite. Note that, components of the matrix $K_{i,j}$ are $K(x_i,x_ j)$, as well as $K_{i,i}=K(x_i,x_i)=1$ for $i=1,2,...,n$. It shows all elements on the main diagonal matrix $K$ are one.
Now, we are in the position to  present the positive definite property for the new proposed kernel function, which delineates some properties of the new proposed kernel function.
\begin{lemma}[\citep{Berg84}]\label{t1}
Suppose  $K: X \times X\rightarrow\Bbb{R}$ is a kernel function. Then $K$ is positive
definite if and only if
\begin{equation}
    \det(K(x_i,x_j)_{i,j\leq n})\geq0,\qquad ~ \{x_1,x_2,...,x_n\}\subseteq X.\nonumber
\end{equation}
\end{lemma}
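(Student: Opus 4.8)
The plan is to establish the two directions of the equivalence separately, using as a bridge the classical linear-algebra characterization of positive semidefiniteness through principal minors. Recall that, by the preceding definition, $K$ being positive definite means that for every finite collection $\{x_1,\dots,x_n\}\subseteq X$ the associated symmetric Gram matrix $M=\big(K(x_i,x_j)\big)_{i,j\le n}$ satisfies $c^{\top}Mc=\sum_{i,j}c_ic_jK_{i,j}\ge 0$ for all $c\in\Bbb{R}^n$; that is, $M$ is positive semidefinite. My goal is therefore to connect this spectral condition with the determinant condition stated in the lemma.

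For the forward direction, I would assume $K$ is positive definite and fix an arbitrary finite subset $\{x_1,\dots,x_n\}$. Then $M$ is positive semidefinite, hence all of its eigenvalues are nonnegative; since $\det(M)$ is the product of these eigenvalues, we immediately obtain $\det(M)\ge 0$. This direction is routine and requires only that a positive semidefinite matrix has nonnegative determinant.

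The reverse direction is the substantive part. Here I would exploit the fact that the hypothesis $\det\big(K(x_i,x_j)\big)\ge 0$ is assumed for \emph{every} finite subset of $X$, not merely for one fixed collection. Fix $\{x_1,\dots,x_n\}$ with Gram matrix $M$. Every principal minor of $M$ — the determinant obtained by deleting a matched set of rows and columns — is exactly the determinant of the Gram matrix built from the corresponding sub-collection $\{x_{i_1},\dots,x_{i_k}\}$, and so is nonnegative by hypothesis. I would then invoke the symmetric-matrix criterion that a real symmetric matrix all of whose principal minors are nonnegative is positive semidefinite. Applying this to $M$ gives $c^{\top}Mc\ge 0$ for all $c$, which is precisely positive definiteness of $K$.

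The main obstacle I anticipate lies in this last criterion together with the care needed in reading the determinant hypothesis. It is tempting to appeal to Sylvester's criterion, but that test — strict positivity of the \emph{leading} principal minors — characterizes strict positive definiteness, not the semidefinite case; the matrix $\mathrm{diag}(0,-1)$ shows that controlling only a chain of leading minors is insufficient, since both of its leading minors vanish while the matrix is not semidefinite. What rescues the argument is that the quantifier ranges over all finite subsets, so the determinant condition in fact encodes nonnegativity of \emph{all} principal minors of every Gram matrix, and it is this full family of minors — not just the top one — that forces positive semidefiniteness.
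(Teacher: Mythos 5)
Your proof is correct. Note, however, that the paper itself gives no proof of this lemma: it is stated as a quoted result from \cite{Berg84} and used as a black box in the proof of Theorem~\ref{positive_matrix}, so there is no in-paper argument to compare against --- your proposal supplies what the paper delegates to the literature. Both directions of your argument are sound. The forward direction is the routine observation that a positive semidefinite symmetric matrix has nonnegative eigenvalues and hence nonnegative determinant. In the reverse direction you correctly exploit the fact that the determinant hypothesis is quantified over \emph{all} finite subsets of $X$, so every principal minor of every Gram matrix --- being itself the determinant of the Gram matrix of a sub-collection $\{x_{i_1},\dots,x_{i_k}\}$ --- is nonnegative; the standard criterion that a real symmetric matrix with all principal minors nonnegative is positive semidefinite then closes the argument. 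Your caution about Sylvester's criterion is also exactly right: nonnegativity of the leading principal minors alone does not imply semidefiniteness, and $\mathrm{diag}(0,-1)$ is the standard counterexample, so identifying that the hypothesis yields \emph{all} principal minors is the essential point of the proof. If you wanted the argument to be fully self-contained, you could also prove the principal-minor criterion itself: writing
\begin{equation}
\det(\mu I + M)=\mu^{n}+E_{1}\mu^{n-1}+\cdots+E_{n},\nonumber
\end{equation}
where $E_{k}\geq 0$ is the sum of the $k\times k$ principal minors of $M$, shows that $\det(\mu I+M)>0$ for every $\mu>0$, hence $M$ has no negative eigenvalues; but invoking the criterion as a known fact is a legitimate level of granularity for this lemma.
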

\begin{theorem}\label{positive_matrix} The new proposed kernel functions defined by Eq. \ref{ker}  is positive definite.
\end{theorem}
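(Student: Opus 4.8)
The plan is to exploit the fact that the kernel in Eq.~\ref{kernel_func} is shift-invariant and radial: writing $r=\|x-x'\|$ it has the form $K(x,x')=f(r^{2})$ with $f(t)=\sin\!\big(\tfrac{\pi}{2+\sigma t}\big)$. For kernels of this type the principled route to positive definiteness is harmonic analysis rather than the determinant criterion of Lemma~\ref{t1}. By Bochner's theorem a continuous shift-invariant kernel is positive definite on $\Bbb{R}^{d}$ exactly when the Fourier transform of its profile is a nonnegative measure, and the dimension-free version is governed by Schoenberg's theorem, which states that $f(\|x-x'\|^{2})$ is positive definite on every $\Bbb{R}^{d}$ if and only if $f$ is completely monotone on $(0,\infty)$. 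I would therefore try to verify one of these two equivalent positivity conditions, since a direct attack on $\det(K_{i,j})\geq 0$ for every finite configuration $\{x_{1},\dots,x_{n}\}$ leads to an intractable oscillatory determinant.

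A natural first step is to expand the sine and reduce everything to a single elementary building block. Using $\sin u=\sum_{k\geq 0}\tfrac{(-1)^{k}}{(2k+1)!}\,u^{2k+1}$ with $u=\tfrac{\pi}{2+\sigma r^{2}}$, the kernel becomes a series in powers of the inverse-multiquadric profile $\tfrac{1}{2+\sigma\|x-x'\|^{2}}$. Each such power is itself a bona fide positive definite kernel: the inverse multiquadric $(a+\|x-x'\|^{2})^{-\beta}$ is positive definite for every $\beta>0$ and every dimension (its profile is completely monotone), and integer powers and products of positive definite kernels are again positive definite by the Schur product theorem. Hence if the coefficients of this series were nonnegative, positive definiteness would follow immediately by taking a convergent nonnegative combination of positive definite kernels.

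The main obstacle is precisely that these coefficients are \emph{not} all nonnegative: the Taylor coefficients of $\sin$ alternate in sign, so the representation above exhibits $K$ as a difference of positive definite kernels rather than a positive combination. Equivalently, the profile is not completely monotone --- a short computation gives $f''(0)=-\pi^{2}\sigma^{2}/16<0$, so the $n=2$ monotonicity inequality already fails near the origin --- and therefore Schoenberg's criterion shows the kernel cannot be positive definite in all dimensions simultaneously. The genuine content of the theorem is thus necessarily dimension-dependent, and the crux is a sign analysis of the radial Fourier integral of $f(\|\cdot\|^{2})$ in the fixed dimension $d$ of interest: one must show that the positive leading contribution dominates the negative higher-order terms for the relevant ranges of $\sigma$ and $d$. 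I expect this Fourier-positivity estimate --- controlling an oscillatory integral whose sign is not evident --- to be the hard part, and I would either bound the negative tail by the leading term or restrict the parameter range to where the transform provably stays nonnegative.
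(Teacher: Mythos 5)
Your proposal does not reach a proof: after correctly reducing the question to Fourier positivity of the radial profile in a fixed dimension, you leave exactly that step --- the sign analysis of the oscillatory Fourier integral --- unresolved, so nothing is actually established. However, the gap you leave is not one you could have closed, and your complete-monotonicity computation explains why. Your calculation is correct: with $f(t)=\sin\bigl(\tfrac{\pi}{2+\sigma t}\bigr)$ one gets $f''(0)=-\pi^{2}\sigma^{2}/16<0$, so $f$ is not completely monotone, and by Schoenberg's theorem there exist a dimension $d$, points $x_{1},\dots,x_{n}\in\Bbb{R}^{d}$, and coefficients $c\in\Bbb{R}^{n}$ with $\sum_{i,j}c_{i}c_{j}f(\|x_{i}-x_{j}\|^{2})<0$. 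Since the theorem is stated for data in $\Bbb{R}^{d}$ with no restriction on $d$, this observation shows the claim cannot be proven in the stated generality; any true version must fix the dimension. (Note also that restricting $\sigma$ cannot help, contrary to your last suggestion: $\sigma$ is absorbed by rescaling the points $x\mapsto\sqrt{\sigma}\,x$, so positive definiteness depends only on the dimension.)

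For comparison, the paper's own proof is entirely different in style and is itself invalid. It is a determinant induction: the Gram matrix is written with unit diagonal and off-diagonal entries $l_{ij}\in[0,1)$, one step of Gaussian elimination is performed, and the resulting $(n-1)\times(n-1)$ block is asserted to have positive determinant ``by the inductive hypothesis.'' This fails on two counts. First, the Schur-complement block has entries $K_{jk}-K_{1k}K_{j1}$, which are not values of the kernel at any set of $n-1$ points (its diagonal entries are $1-K_{1j}^{2}\neq 1$), so the inductive hypothesis does not apply to it. Second, the argument uses nothing about the kernel beyond symmetry, unit diagonal, and off-diagonal entries in $[0,1)$, and those properties alone do not imply positive definiteness: the symmetric $3\times 3$ matrix with unit diagonal and off-diagonal entries $0.9,\,0.9,\,0$ has determinant $1-0.81-0.81=-0.62<0$. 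So the paper's proof fails for structural reasons that your Schoenberg obstruction makes precise: if positivity holds at all, it must come from the metric and Fourier structure in a fixed dimension --- which is precisely the part that both you and the paper leave unproven.
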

\begin{proof}
We begin  by induction on $n$.
Considering the structure of matrix $K$ as:
\begin{equation}\label{matrixk}
K = \left(
\begin{array}{cccc}
1& l_{12}&\cdots&l_{1n} \\
l_{21} &1&\cdots&l_{2n}\\
\vdots&\vdots&\cdots&\vdots\\
l_{n1}&l_{2n}&\cdots&1
\end{array} \right),
\end{equation}
where $0\leq l_{ij}<1,~~i\neq j, 1\leq i,j\leq n$, and $l_{ij}=K(x_i,x_j) $. 
The property  $K(x_i,x_j)=K(x_j,x_i)$ implies $K$ 
is a symmetric matrix. For the matrix $K$, we have:
\begin{eqnarray*}
n=1,&&\qquad det(K_{1,1})=1>0;\\
n=2,&&\qquad det(K_{2,2})=1-l_{12}^2>0,
\end{eqnarray*}
in which $K_{i,i}$ denotes a  square matrix with dimension $i$. Let us suppose the theorem be  true for $n-1$. We
prove that $det(K_{n,n})>0$.  Using the fact that $K_{11}=1>0$ and
by subtracting $K_{1,1}$  times the first column from the
$K$-th column, $K = 2,\ldots,n$,  the new matrix where the first column remained unchanged and other columns are changed by using the relation $K'_{jk} = K_{jk}-K_{1k}K_{j1}$ for $k\geq 2$ can be obtained.
In addition, the new matrix has the same principal minors as the matrix $K$, so we have:
\begin{equation}
    det(K'_{j,k\leq p})=det(K_{j,k\leq p}),\quad  p=1,2,\ldots,n.
\end{equation}
So, the matrix $K'$ can be rewritten as:
\begin{equation}\label{matrixk'}
K' = \left(
\begin{array}{cccc}
1& 0&\cdots&0 \\
0 &k'_{22}&\cdots&k'_{2n}\\
\vdots&\vdots&\cdots&\vdots\\
0&k'_{2n}&\cdots&k'_{nn}
\end{array} \right),
\end{equation}
Determinant of the matrix $K$ can be calculated as follows:
\begin{eqnarray*}
det(K)=1\ast det\left(
\begin{array}{ccc}
k'_{2,2}&\cdots&k'_{2,n}\\
\vdots&\cdots&\vdots\\
k'_{2,n}&\cdots&k'_{n,n}
\end{array} \right)>0,
\end{eqnarray*}
where the inequality is obtained from the fact that determinate of matrix $K_{n-1,n-1}$ is positive.
\end{proof}
Theorem \ref{positive_matrix} implies that the proposed  kernel functions is positive definite.

\section{A new mixed kernel function}
Mixing or combining several kernel functions into one kernel function was used to improve
the performance of a single kernel function
 \citep{kung2014kernel}.  Here, we utilize this idea and suggest a new mixed kernel function, that is a convex combination of the Gaussian kernel function and the new trigonometric kernel function given by Eq. \ref{kernel}. For this purpose, we first recall a lemma allowing us to combine two kernel functions.
\begin{lemma}\label{comba}
Let $f$ and $g$ be two  kernel functions. Then, a convex combination of $f$ and $g$, i.e.,  $\beta f+(1-\beta)g$ is also a kernel function for all $\beta\in[0,1]$. 
\end{lemma}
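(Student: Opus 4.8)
The plan is to reduce the claim to the positive-semidefiniteness characterization of kernels already set up in this section. Recall that a symmetric function $K$ qualifies as a kernel precisely when, for every finite selection $\{x_1,\dots,x_m\}\subseteq X$ and every vector $c\in\Bbb{R}^m$, the quadratic form $\sum_{i,j=1}^m c_ic_jK(x_i,x_j)$ is non-negative. Since $f$ and $g$ are assumed to be kernels, both of the corresponding quadratic forms are non-negative for any fixed choice of points and coefficients, and this is the only fact about them I intend to use.

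First I would fix an arbitrary finite set $\{x_1,\dots,x_m\}$ and an arbitrary $c\in\Bbb{R}^m$, and write out the quadratic form associated with $K:=\beta f+(1-\beta)g$. By linearity this form splits as $\beta\sum_{i,j=1}^m c_ic_jf(x_i,x_j)+(1-\beta)\sum_{i,j=1}^m c_ic_jg(x_i,x_j)$. The two sums are each non-negative by the kernel hypothesis, and the restriction $\beta\in[0,1]$ guarantees that both weights $\beta$ and $1-\beta$ are non-negative. A non-negative combination of non-negative numbers is non-negative, so the whole form is $\ge 0$. Symmetry of $K$ is inherited directly from the symmetry of $f$ and $g$. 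Together these show $K$ is a positive semidefinite symmetric function, hence a kernel.

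An alternative, more constructive route would exhibit an explicit feature map. Writing $f(x,x')=\langle\Phi_f(x),\Phi_f(x')\rangle$ and $g(x,x')=\langle\Phi_g(x),\Phi_g(x')\rangle$ on feature spaces $H_f$ and $H_g$, one sets $\Phi(x):=(\sqrt{\beta}\,\Phi_f(x),\sqrt{1-\beta}\,\Phi_g(x))$ in the direct-sum space $H_f\oplus H_g$ and verifies by a short computation that $\langle\Phi(x),\Phi(x')\rangle=\beta f(x,x')+(1-\beta)g(x,x')=K(x,x')$. This is where the interval $[0,1]$ re-enters: it is exactly the range of $\beta$ for which the scalars $\sqrt{\beta}$ and $\sqrt{1-\beta}$ are real, so that the feature map is well defined and lands in a genuine Hilbert space.

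There is no genuine obstacle in this argument; the single point requiring care is the role of the hypothesis $\beta\in[0,1]$. Were $\beta$ allowed outside this interval, one of the two weights would be negative, the splitting in the quadratic form would no longer be a non-negative combination, and the conclusion could fail outright. Thus I would emphasize that the interval $[0,1]$ is not merely a convenient assumption but is precisely what makes the convex combination remain a kernel.
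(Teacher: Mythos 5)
Your proposal is correct, but note that the paper itself offers no proof of this lemma at all: it is simply ``recalled'' as a known fact from the literature, so there is no paper argument to match against. Of your two routes, the second (direct-sum feature map) is the one that fits the paper best, because the paper's Definition~1 defines a kernel via the existence of a feature map $\Phi$ into a Hilbert space, and your construction $\Phi(x)=\bigl(\sqrt{\beta}\,\Phi_f(x),\sqrt{1-\beta}\,\Phi_g(x)\bigr)$ in $H_f\oplus H_g$ verifies exactly that definition, self-contained and with the role of $\beta\in[0,1]$ made transparent. Your first route (non-negativity of the quadratic form) is also standard and the algebra is right, but be aware that it proves positive semidefiniteness in the sense of the paper's Definition~3, and passing from ``symmetric positive semidefinite function'' back to ``kernel in the sense of Definition~1'' requires the Moore--Aronszajn theorem (existence of a reproducing-kernel Hilbert space), an equivalence the paper never states; so as written, that branch of your argument leans on a fact outside the paper's toolkit. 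If you keep only the feature-map construction, your proof is complete, elementary, and arguably an improvement on the paper, which asserts the lemma without justification.
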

Using Lemma \ref{comba}, we introduce a new  mixed kernel function.
\begin{proposition}
If $\beta \in [0,1]$, then the new  mixed kernel function is given by:
\begin{equation}\label{c_ker}
K_c(x,x'):=\beta\sin(h_1(x,x'))+(1-\beta)e^{\frac{\|x-x'\|^2}{2\sigma^2}},\quad  \texttt{where}\quad h_1(x,x')=\frac{\pi}{2+\sigma\|x-x'\|^2}.
\end{equation}   
\end{proposition}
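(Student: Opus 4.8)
The plan is to observe that $K_c$ is, by construction, a convex combination of two functions each of which has already been certified to be a kernel, so that the whole argument reduces to an application of Lemma \ref{comba}. First I would identify the two building blocks explicitly. The first summand, $(x,x')\mapsto\sin(h_1(x,x'))$ with $h_1(x,x')=\frac{\pi}{2+\sigma\|x-x'\|^2}$, is precisely the trigonometric kernel introduced in Eq. \ref{kernel_func}, and this was shown to be a positive definite kernel in Theorem \ref{positive_matrix}. The second summand is the Gaussian kernel $K_2$ of Table \ref{kernel}. Thus the structural content of the proposition is just: kernel $+$ kernel, combined convexly, stays a kernel.

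Second, I would justify that each constituent genuinely qualifies as a kernel in the sense of the earlier definition. For the trigonometric term this is immediate from Theorem \ref{positive_matrix}. For the Gaussian term I would invoke the classical fact that the Gaussian is a shift-invariant positive definite kernel; if a self-contained justification is desired, one appeals to Bochner's theorem, since $e^{-\|x-x'\|^2/(2\sigma^2)}$ is the Fourier transform of a nonnegative (Gaussian) measure, which forces nonnegativity of every associated quadratic form $\sum_{i,j}c_ic_jK_2(x_i,x_j)$.

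Third, with $f(x,x')=\sin(h_1(x,x'))$ and $g(x,x')=e^{-\|x-x'\|^2/(2\sigma^2)}$ both established as kernels, I would apply Lemma \ref{comba} with the given $\beta\in[0,1]$ to conclude at once that $K_c=\beta f+(1-\beta)g$ is again a kernel, completing the proof. The argument carries no deep obstacle; the only genuine content is confirming the two summands are kernels, and that burden is discharged by Theorem \ref{positive_matrix} and the standard Gaussian result. The one point I would flag for care is the exponent as printed in Eq. \ref{c_ker}: written with a positive sign, $e^{+\|x-x'\|^2/(2\sigma^2)}$ is unbounded and is \emph{not} positive definite, so to match $K_2$ of Table \ref{kernel} I would read the Gaussian term with the negative exponent $-\|x-x'\|^2/(2\sigma^2)$; under that reading the convex-combination argument goes through cleanly.
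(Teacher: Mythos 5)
Your proof is correct and follows essentially the same route as the paper: the paper also justifies this proposition simply by invoking Lemma \ref{comba} applied to the new trigonometric kernel (certified positive definite in Theorem \ref{positive_matrix}) and the Gaussian kernel, with no further argument given. Your flag about the exponent is also well taken --- as printed in Eq. \ref{c_ker} the Gaussian term carries a positive sign, which is a typographical error, and it must be read as $e^{-\|x-x'\|^2/(2\sigma^2)}$ (matching $K_2$ of Table \ref{kernel}) for the convex-combination argument to be valid.
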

Using Lemma \ref{comba}, we have the following Remark. 
\begin{remark}
The kernel function defined by Eq. \ref{c_ker} is a positive definite kernel function. 
\end{remark}
Note that Eq. \ref{c_ker} is a kernel function and satisfies the properties of the kernel function. To compare behaviour of the new  mixed kernel with the Gaussian kernel function and new trigonometric kernel function, we  plot behaviour  of these functions in {Fig \ref{beh}. This figure clearly shows that the value of the mixed fraction is always between the new proposed kernel function and the Gaussian function. In fact, the mixed kernel function can control the effects of changing the parameter $\sigma$ well.}

\begin{figure}[h!]
	\begin{center}
		\includegraphics[width=.45\textwidth]{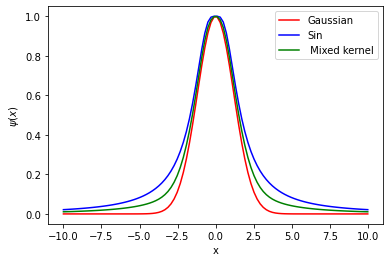}
		\caption{Behavior of the new  mixed kernel function, the Gaussian kernel function and new trigonometric kernel function.}
		\label{beh}
	\end{center}
\end{figure}  

\section{Numerical results}
This section gives some numerical results of  performing the kernel-SVM and kernel-SVR on various data sets. 
\subsection{SVM problem}
The kernel SVM optimization problem can be express as:
\begin{eqnarray*}  &&~\min_{\alpha}~~\sum_{i=1}^n\sum_{j=1}^{n}\alpha_i\alpha_jy_iy_jK(x_i,x_j)-\sum_{i=1}^{n}\alpha_j\\
   && ~~~S.t.~~ \sum_{i=1}^ny_i\alpha_i=0,\quad 0\leq \alpha_i\leq C,
\end{eqnarray*}
where $n$ denotes the number of samples, $C$ is the penalty or
regularization parameter, and  $K(x_i,x_j)$ is a kernel function. 
\subsection{Kernel functions}
We implement the SVM  based on the following kernel functions:
\begin{eqnarray}
    K_1(x,x_j)&=&(1+xx_j)^p,\quad p=2\label{pol}\\
    K_2(x,x_j)&=&\exp(-\frac{\|x-x_j\|^2}{2\sigma^2})\label{gus}\\
    K_3(x,x')&=&\sin(\frac{\pi}{2+\sigma\|x-x'\|^2})
    \label{new_ker}\\
    K_4(x,x')&=&\beta\sin(\frac{\pi}{2+\sigma\|x-x'\|^2})+(1-\beta)e^{\frac{\|x-x'\|^2}{2\sigma^2}},\quad \beta =\frac{1}{2}.\label{multi}
\end{eqnarray}

\subsection{Parameter selection}
We use the idea presented in \citep{parameter2019} to find the best values for the parameters $C$ and $sigma$. In fact, we select the best values for $sigma$ and $C$ from the following set: 
\begin{equation*}
    c,~ \sigma \in\{2^{-5},2^{-4},...,2^{10}\}.
\end{equation*}
To apply  \citep{parameter2019}'s idea, we first calculate the minimum and maximum distances between samples in each class. After that, we choose large values for $\sigma$ when the dataset is compact and small values when the dataset is sparse. In this regard, we consider the following three datasets.
\begin{itemize}
    \item {\bf{Gaussian dataset}}: Gaussian  dataset has 400 samples and two classes. Each sample has two features.  This is a compact dataset because the maximum distances between samples of the first and second classes are 4.736 and 5.8477, respectively, and the minimum distances between samples of the first and second classes are 0.0412 and 0.1427, respectively. We run the kernel-SVM with different values of the parameters $\sigma$ and $C$. The obtained results for $C=1$ and $C=10$ are shown in Tables \ref{Guass1} and \ref{Guass2}. Large values for $\sigma$ produce better results because the distance between samples is small. 

\begin{table}[H] 
\centering
\begin{tabular}{|c|c|c|c|c|c|c|c|c|c|c|}
\hline
Results&$\sigma=0.1$&$\sigma=1$&$\sigma=2$&$\sigma=10$&$\sigma=50$&$\sigma=100$&$\sigma=1000$\\
\hline
\# SV&2&156&315&128&173&215&315\\
\# TrE.&63&48&43&27&17&10&3\\
\# TsE.&17&35&35&27&22&18&14\\
\hline
\end{tabular}
    \caption{The number of misclassified training samples (\# TrE.), number of misclassified testing samples (\# TsE.), and number of support vectors (\# SVs) of the new trigonometric  kernel SVM using different values of parameter $\sigma$ with $C=1$  with Gaussian dataset. }
    \label{Guass1}
\end{table}

\begin{table}[H] 
\centering
\begin{tabular}{|c|c|c|c|c|c|c|c|c|c|c|}
\hline
Results&$\sigma=0.1$&$\sigma=1$&$\sigma=2$&$\sigma=10$&$\sigma=50$&$\sigma=100$&$\sigma=1000$\\
\hline
\# SV&267&180&199&93&317&179&312\\
\# TrE.&108&66&62&31&14&3&0\\
\# TsE.&31&24&22&33&35&19&16\\
\hline
\end{tabular}
    \caption{The number of misclassified training samples (\# TrE.), number of misclassified testing samples (\# TsE.), and number of support vectors (\# SVs) of the new trigonometric  kernel SVM using different values of parameter $\sigma$ with $C=10$  with Gaussian dataset.}
    \label{Guass2}
\end{table}

\item {\bf{College dataset}}: The college dataset has 777 samples and two classes. Each sample has 17 features.  The first class has 565 samples, and the second has 212 samples.  We discovered that the maximum distances between samples of the first and second classes are $20,111$ and $47,861$, respectively, and the minimum distances between samples of the first and second classes are 0. Small values for $\sigma$ can produce better results in this case. We run the kernel-SVM with various values of the $sigma$ and $C$ parameters. The obtained results for $C=1$ and $C=10$ are shown in Tables \ref{colleg1} and \ref{colleg10}.
    
\begin{table}[H] 
\centering
\begin{tabular}{|c|c|c|c|c|c|c|c|c|c|c|}
\hline
Results&$\sigma=0.1$&$\sigma=1$&$\sigma=2$&$\sigma=10$&$\sigma=50$&$\sigma=100$&$\sigma=1000$\\
\hline
\# SV&621&621&621&621&621&621&621\\
\# TrE.&0&0&0&0&0&0&0\\
\# TsE.&33&29&30&32&31&33&39\\
\hline
\end{tabular}
    \caption{The number of misclassified training samples (\# TrE.), number of misclassified testing samples (\# TsE.), and number of support vectors (\# SVs) of the new trigonometric  kernel SVM using different values of parameter $\sigma$ with $C=1$  with College dataset.}
    \label{colleg1}
\end{table}

\begin{table}[H] 
\centering
\begin{tabular}{|c|c|c|c|c|c|c|c|c|c|c|}
\hline
Results&$\sigma=0.1$&$\sigma=1$&$\sigma=2$&$\sigma=10$&$\sigma=50$&$\sigma=100$&$\sigma=1000$\\
\hline
\# SV&621&621&621&621&621&621&621\\
\# TrE.&0&0&0&0&0&0&0\\
\# TsE.&31&31&28&228&34&34&38\\
\hline
\end{tabular}
    \caption{The number of misclassified training samples (\# TrE.), number of misclassified testing samples (\# TsE.), and number of support vectors (\# SVs) of the new trigonometric  kernel SVM using different values of parameter $\sigma$ with $C=10$  with College dataset}
    \label{colleg10}
\end{table}
\item  {\bf{Tic-Tac-Toc dataset:}}  This dataset has two classes: the first class has 323 samples, and the second has 626 samples and nine features. The maximum distances between samples of the first and second classes were 4.90 and 4.47, respectively. In this case, we need large values for parameter $\sigma$. We present a summary of the results in Table \ref{tic}.
\begin{table}[H] 
\centering
\begin{tabular}{|c|c|c|c|c|c|c|c|c|c|c|}
\hline
Results&$\sigma=0.1$&$\sigma=1$&$\sigma=2$&$\sigma=10$&$\sigma=50$&$\sigma=100$&$\sigma=1000$\\
\hline
\# SV&674&348&476&751&766&766&766\\
\# TrE.&37&8&0&0&0&0&0\\
\# TsE.&47&35&28&24&19&16&22\\
\hline
\end{tabular}
    \caption{The number of misclassified training samples (\# TrE.), number of misclassified testing samples (\# TsE.), and number of support vectors (\# SVs) of the new trigonometric  kernel SVM using different values of parameter $\sigma$ with $C=1$  with the Tic-Tac-Toc dataset}
    \label{tic}
\end{table}
\end{itemize}

\subsection{Datasets}
We perform the SVM on the 24 datasets with different number of samples and features\footnote{https://www.openml.org/search?q=gisette\&type=data\&sort=runs\&status=active}.
 There are two
data samples are used in all experiments; $80\%$ of the data were used to train SVM and $20\%$ of the data were used to test model.
Table \ref{data1} gives details about 24  datasets used in this section. Note that ``Name'' denotes the name of the dataset, ``F'' presents the number of the features of the dataset, and ``N'' is the number of samples.

\begin{table}[H] 
\centering
\begin{tabular}{|l|lll|l|lllllll|}
\hline
i&Name&\#F&\#N&i&Name&\#F&\#N\\
\hline
1&Banana&2&5300&13&Sonar&60&208\\
2&Bill-authentication&4&1372&14&SPECT&22&80\\
3&Carseats&10&400&15&Weekly&8&1089\\
4&College&17&777&16&Wisc-bc-data&31&569\\
5&Column-2C-weka1&6&310&17&PhpVDlhKL&230&64\\
6&Dataset-spine&12&309&18&Fri-c1-100-10&11&100\\
7&Ex2data1&2&100&19&Svmguide1&4&3089\\
8&Ex2data2&2&118&20&tic-tac-toe.data&10&958\\
9&GaussianData&2&400&21&analcatdata$_-$lawsuit&5&264\\
10&GSE58606-data1&1927&133&22&phpAmSP4g&31&569\\
11&Numeric-sequence&28&2400&23&kc2&22&5228\\
12&Pimalndians1&8&390&24&haberman&4&306\\
\hline
\end{tabular}
    \caption{Data Sets}
    \label{data1}
\end{table}

\subsection{Results of performin SVM }
Classification accuracy for SVM  based on four  kernel functions given by Eqs. \ref{pol}--\ref{multi} are demonstrated  in Fig \ref{all_re}. Note that x-axis denotes the number of dataset (i.e., $i$) in Table \ref{data1}. In addition,
``$K_1$" shows the accuracy results for  polynomial kernel-SVM, ``$K_2$'' denotes the results for the  Gaussian kernel function, and ``$K_3$'' and ``$K_4$'' are results for the new trigonometric  kernel function and  mixed kernel function given by Eq. \ref{multi}, respectively.

\begin{figure}[h!]
	\begin{center}		\includegraphics[width=0.75\textwidth]{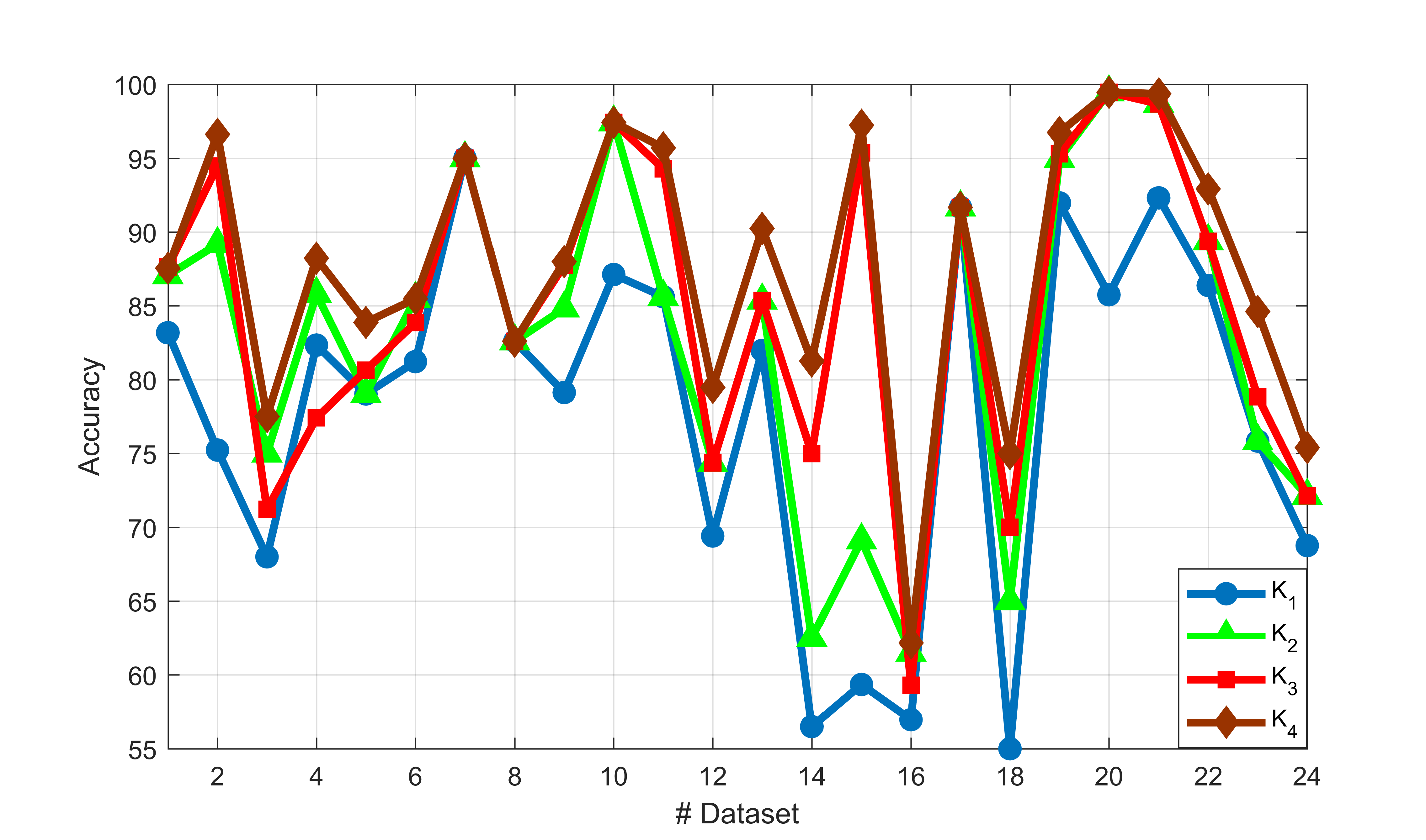}
		\caption{Accuracy results of performing SVM based on the four kernel functions on 24 data-set denoted in Table 7.}
		\label{all_re}
	\end{center}
\end{figure}

%Moreover, we plot the behavior of the proposed kernel functions 
 %given by (\ref{new_ker}) and (\ref{multi}) based for  different values of parameter $\sigma$  in terms of accuracy in Fig. \ref{sigma}. It shows that the multiple- kernel function given by  Eq. (\ref{new_ker}) achieves the best results when  $0< \sigma\leq 0.5$. On the other hand, the SVM based on the  mixed kernel function defined by Eq. (\ref{multi}) meets the best  results when $\sigma=0.1$. Moreover, when $0<\sigma\leq 1$, the classification accuracy is 

\subsection{Kernel SVR}
Here we investigate efficiency of the proposed kernel functions for SVR approach. In this regard, we implement the SVR based on the  kernel functions given by Eqs. \ref{new_ker}  and \ref{multi} on a set of 200 data points generated by using the following function:
\begin{equation*}\label{fun_svr}
f(x)=\sin(x)*\exp(-0.2*x)+\epsilon
\end{equation*}
where $\epsilon$ is a random number. \\
{Fig. \ref{svr}}  demonstrates the behaviour of the SVR based on two mentioned kernel functions.  
 { This figure shows SVR based on the new proposed kernel function and mixed kernel function can fit the best curve within an $\epsilon$ tube to predict the model.  }
\begin{figure}[h!]
	\begin{center}
\includegraphics[width=0.75\textwidth]{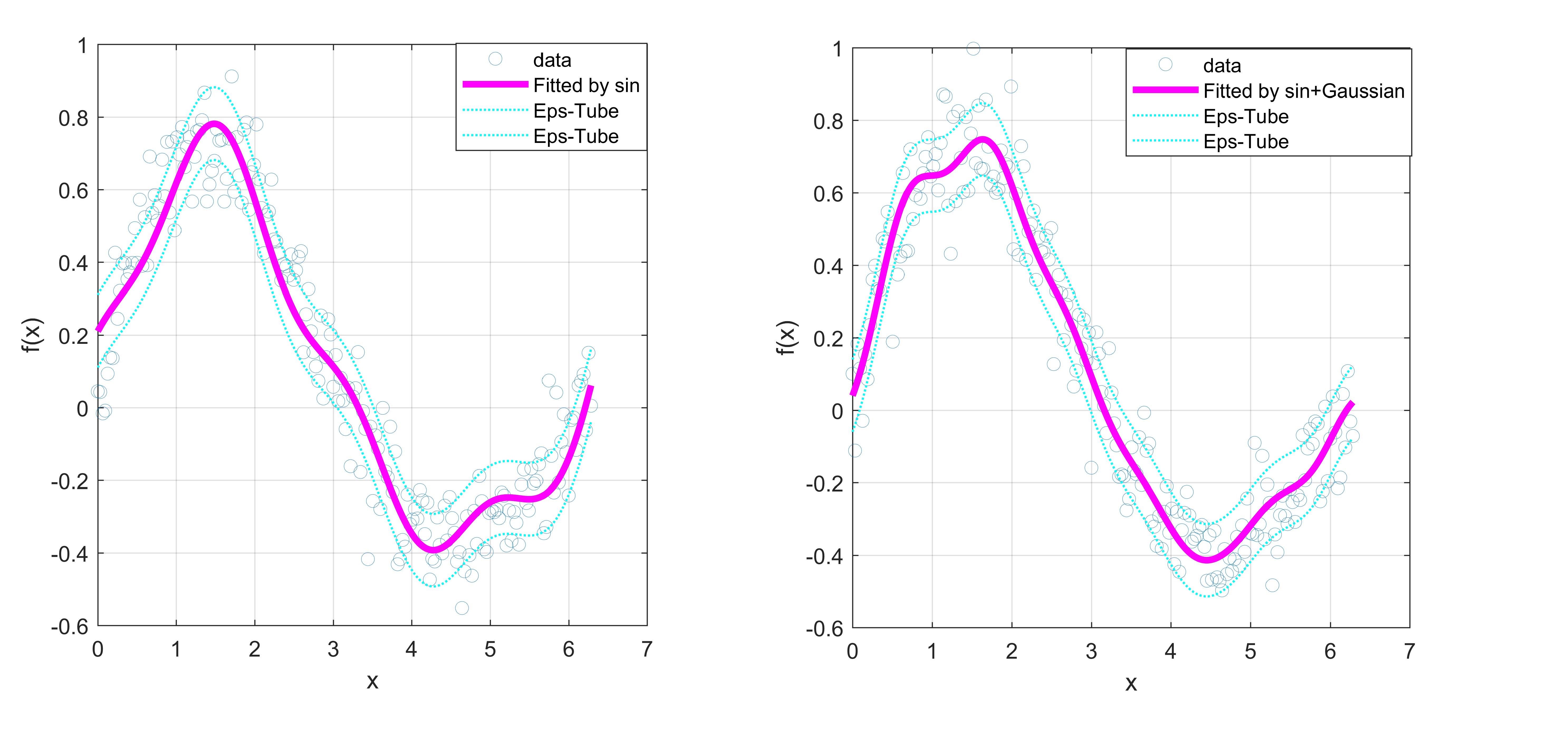}
		\caption{Behaviour of SVR based on the kernel functions given by Eq. (\ref{new_ker}) and Eq. (\ref{multi}). }
		\label{svr}
	\end{center}
\end{figure}

\subsection{Discussion}
Based on  Tables \ref{Guass1}-- \ref{data1}, Figs. \ref{all_re} and \ref{svr}, we can conclude that:
\begin{itemize}
\item A variety of datasets with different numbers of samples and features have been used.
\item  The average classification accuracies of the SVM method based on the polynomial, Gaussian, new trigonometric, and mixed kernel functions are  $77.93\%$, $82.79\%$, $84.88\%$, and $87.64\%$ respectively.
    \item The new trigonometric kernel-SVM  improved the accuracy of the Gaussian kernel-SVM (from $82.79\%$ to $84.88\%$) based on average accuracy.  
    \item The new  mixed kernel function improved the classification accuracy of the SVM. According to the results, it improves the $4.85\%$ accuracy of the Gaussian kernel-SVM and the $2.66\%$ accuracy of the new proposed kernel-SVM. 
    \item The new kernel-SVM achieves better classification accuracy for 10 datasets than the Gaussian kernel-SVM.
    \item The new proposed kernel function achieves the best results with small values of $\sigma$ when the dataset is sparse and with large values of $\sigma$ when the dataset is compact.
    \item The SVR based on the new proposed kernel functions is able to predict an appropriate graph with the minimum error.
\end{itemize}

\section{Concluding remarks}
In this paper, we introduced a new trigonometric kernel function for machine learning. Various properties of the proposed kernel function were investigated. Then we combined the proposed kernel function with the Gaussian kernel function and introduced a new mixed kernel function. The numerical results confirmed that the new proposed trigonometric kernel function improved the classification accuracy of the SVM for the most considered datasets. Moreover, the mixed kernel function achieved the best results in terms of classification accuracy. 

\section*{Declarations}

\subsection*{Acknowledgements} The authors would like to thank the editors and anonymous reviewers for their constructive comments.

\subsection*{Funding} No funding was received for conducting this study.
\subsection*{Ethics approval and consent to participate} Not applicable. 
\subsection*{Consent for publication}
Not applicable.
%\subsection*{Author Contributions} Related work was carried out by all the authors. The implementation of the proposal and experiments was carried out by Z. Moaberfard.  All authors drafted, revised and approved the manuscript.
\subsection*{Availability of data and material} The dataset used in this paper is available from the following link \textcolor{blue}{ https://paperswithcode.com/dataset/orl}

\subsection*{Conflict of Interests} The authors have no conflicts of interest to declare that are relevant to the content of
this article.

\subsection*{Competing interests} The authors have declared that no competing
interests exist.

\bibliography{sn-bibliography}% common bib file
%% if required, the content of .bbl file can be included here once bbl is generated
%%\input sn-article.bbl

%% Default %%
%%\input sn-sample-bib.tex%

\end{document}